\DeclareMathAlphabet\tsr{OMS}{cmsy}{b}{n}
\newcommand{\mtr}[1]{\mathbf{#1}}
\newcommand{\set}[1]{\{#1\}}
\newcommand{\tuple}[1]{(#1)}
\newcommand{\bbR}{\mathbb{R}}
\newcommand{\bbN}{\mathbb{N}}
\newcommand{\bbT}{\mathbb{T}}
\newcommand{\outers}{\mathcal{O}}
\newcommand{\inners}{\mathcal{I}}
\newcommand{\exchannels}{\mathcal{A}}
\newcommand{\ranks}{\mathcal{R}}
\newcommand{\vset}{\mathcal{V}}
\newcommand{\eset}{\mathcal{E}}
\newcommand{\tnspace}{\bbT}
\renewcommand{\cite}{\citep}
\newtheorem{definition}{Definition}
\newtheorem{proposition}{Proposition}
\newtheorem{theorem}{Theorem}
\newcommand\textvtt[1]{{\normalfont\fontfamily{cmvtt}\selectfont #1}}
\definecolor{tensorblue}{rgb}{0.8,0.8,1}
\definecolor{tensorred}{rgb}{1,0.5,0.5}
\definecolor{tensorpurp}{rgb}{1,0.5,1}
\tikzset{ten/.style={fill=tensorblue}}
\tikzset{tenred/.style={fill=tensorred}}
\tikzset{tengreen/.style={fill=green!50!black!50}}
\tikzset{tenpurp/.style={fill=tensorpurp}}
\tikzset{tengrey/.style={fill=black!20}}
\tikzset{tenorange/.style={fill=orange!30}}
\tikzset{u/.style={fill=blue!20,draw=black}}
\tikzset{w/.style={fill=green!50!black!80,draw=black}}
\newcommand{\diagram}[1]{ 
\begin{tikzpicture}[scale=.5,every node/.style={sloped,allow upside down},baseline={([yshift=+0ex]current bounding box.center)}] #1 \end{tikzpicture} }
\title{Einconv: Exploring Unexplored Tensor Network Decompositions for Convolutional Neural Networks}
\author{%
  Kohei Hayashi \\
  Preferred Networks \\
  \texttt{hayasick@preferred.jp} \\
  \And
  Taiki Yamaguchi\thanks{This work was completed during an internship at Preferred Networks.} \\
  The University of Tokyo \\
  \texttt{yamaguchi@hep-th.phys.s.u-tokyo.ac.jp} \\
  \And
  Yohei Sugawara \\
  Preferred Networks \\
  \texttt{suga@preferred.jp} \\
  \And
  Shin-ichi Maeda \\
  Preferred Networks \\
  \texttt{ichi@preferred.jp} \\
}
\begin{document}

\maketitle

\begin{abstract}
Tensor decomposition methods are widely used for model compression and fast inference in convolutional neural networks (CNNs). Although many decompositions are conceivable, only CP decomposition and a few others have been applied in practice, and no extensive comparisons have been made between available methods. Previous studies have not determined how many decompositions are available, nor which of them is optimal. In this study, we first characterize a decomposition class specific to CNNs by adopting a flexible graphical notation. The class includes such well-known CNN modules as depthwise separable convolution layers and bottleneck layers, but also previously unknown modules with nonlinear activations. We also experimentally compare the tradeoff between prediction accuracy and time/space complexity for modules found by enumerating all possible decompositions, or by using a neural architecture search. We find some nonlinear decompositions outperform existing ones.
\end{abstract}

\section{Introduction}

Convolutional neural networks (CNNs) typically process spatial data such as images using multiple convolutional layers~\cite{goodfellow2016deep}. The high performance of CNNs is often offset by their heavy demands on memory and CPU/GPU, making them problematic to deploy on edge devices such as mobile phones~\cite{howard2017mobilenets}.

One straightforward approach to reducing costs is the introduction of a low-dimensional linear structure into the convolutional layers~\cite{smith1997scientist,rigamonti2013learning,tai2015convolutional,kim2015compression,denton2014exploiting,lebedev2014speeding,wang2018wide}. This typically is done through tensor decomposition, which represents the convolution filter in sum-product form, reducing the number of parameters to save memory space and reduce the calculation cost for forwarding paths. 

The manner in which this cost reduction is achieved depends heavily on the structure of the tensor decomposition. For example, if the target is a two-way tensor, i.e., a matrix, the only meaningful decomposition is $\mtr{X}=\mtr{U}\mtr{V}$, because others such as $\mtr{X}=\mtr{A}\mtr{B}\mtr{C}$ are reduced to that form but have more parameters. However, for higher-order tensors, there are many possible ways to perform tensor decomposition, of which only a few have been actively studied (e.g., see~\cite{kolda2009tensor}). Such multi-purpose decompositions have been applied to CNNs but are not necessarily optimal for them, because of tradeoffs between prediction accuracy and time/space complexity. The need to consider many factors, including application domains, tasks, entire CNN architectures, and hardware limitations, makes the emergence of new optimization techniques inevitable.

In this study, we investigate a hidden realm of tensor decompositions to identify maximally resource-efficient convolutional layers. We first characterize a decomposition class specific to CNNs by adopting a flexible hypergraphical notation based on tensor networks~\cite{penrose1971applications}. The class can deal with nonlinear activations, and includes modern light-weight CNN layers such as the bottleneck layers used in ResNet~\cite{he2015deep}, the depthwise separable layers used in Mobilenet V1~\cite{howard2017mobilenets}, the inverted bottleneck layers used in Mobilenet V2~\cite{sandler2018mobilenetv2}, and others, as shown in Figure~\ref{fig:tensornetworks}. The notation permits us to handle convolutions in three or more dimensions straightforwardly.
In our experiments, we study the accuracy/complexity tradeoff by enumerating all possible decompositions for 2D and 3D image data sets. Furthermore, we evaluate nonlinear extensions by combining neural architecture search with the LeNet and ResNet architectures.
The code implemented in Chainer~\cite{tokui2019chainer} is available at \url{https://github.com/pfnet-research/einconv}.

\begin{figure}[tb]
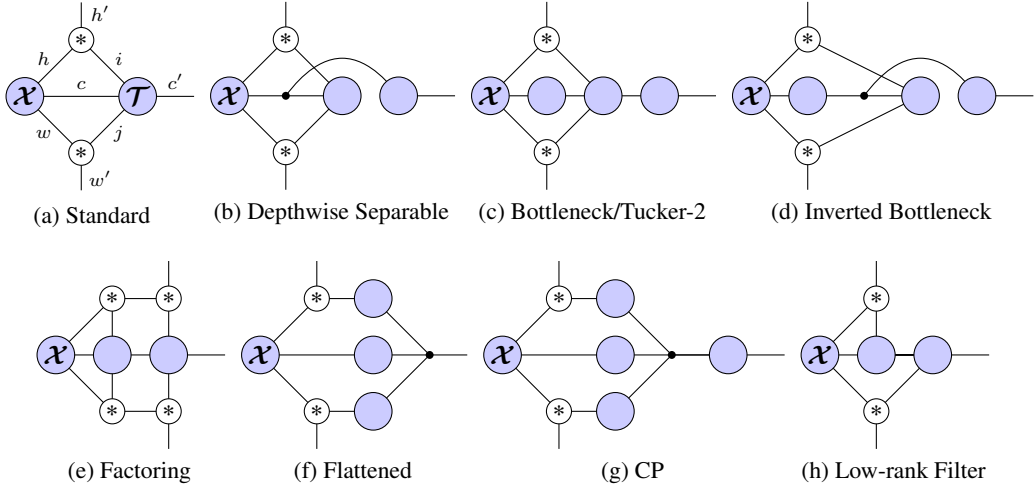

\centering
\subfloat[Standard]{
\label{fig:standard}
\diagram{
    \draw (0,0) -- (1.5,1.5);
    \draw (1.5,1.5) -- (3,0) -- (1.5,-1.5);
    \draw (1.5,-1.5) -- (0,0);
    \draw (0,0) -- (3,0) -- (4.5,0);
    \draw (1.5,1.5) -- (1.5,2.5);
    \draw (1.5,-1.5) -- (1.5,-2.5);
    \draw[ten] (0,0) circle (1/2) node {$\tsr{X}$};
	\draw[ten] (3,0) circle (1/2) node {$\tsr{T}$};
	\draw[fill=white] (1.5,1.5) circle (1/3) node {$\ast$};
	\draw[fill=white] (1.5,-1.5) circle (1/3) node {$\ast$};
	\draw (0.5,1) node {$\scriptstyle h$};
	\draw (2.5,1) node {$\scriptstyle i$};
	\draw (0.5,-1) node {$\scriptstyle w$};
	\draw (2.5,-1) node {$\scriptstyle j$};
	\draw (2,2.20) node {$\scriptstyle h'$};
	\draw (2,-2.17) node {$\scriptstyle w'$};
	\draw (1.5,0.3) node {$\scriptstyle c$};
	\draw (4,0.4) node {$\scriptstyle c'$};
}}
\subfloat[Depthwise Separable]{
\label{fig:depsep}
\diagram{
    \draw (0,0) -- (1.5,1.5) -- (3,0) -- (1.5,-1.5) -- (0,0) -- (3,0);
    \draw (4.5,0) -- (6,0);
    \draw (1.5,1.5) -- (1.5,2.5);
    \draw (1.5,-1.5) -- (1.5,-2.5);
    \draw (1.5,0) parabola bend (3,1) (4.5,0);
    \draw[ten] (0,0) circle (1/2) node {$\tsr{X}$};
	\draw[ten] (3,0) circle (1/2) node {};
	\draw[ten] (4.5,0) circle (1/2) node {};
	\draw[fill=white] (1.5,1.5) circle (1/3) node {$\ast$};
	\draw[fill=white] (1.5,-1.5) circle (1/3) node {$\ast$};
	\fill (1.5,0) circle (3pt);
}}
\subfloat[Bottleneck/Tucker-2]{
\label{fig:bottleneck}
\diagram{
    \draw (0,0) -- (1.5,1.5) -- (3,0) -- (1.5,-1.5) -- (0,0) -- (1.5,0) -- (6,0);
    \draw (1.5,1.5) -- (1.5,2.5);
    \draw (1.5,-1.5) -- (1.5,-2.5);
    \draw[ten] (0,0) circle (1/2) node {$\tsr{X}$};
	\draw[ten] (3,0) circle (1/2) node {};
	\draw[ten] (1.5,0) circle (1/2) node {};
	\draw[ten] (4.5,0) circle (1/2) node {};
	\draw[fill=white] (1.5,1.5) circle (1/3) node {$\ast$};
	\draw[fill=white] (1.5,-1.5) circle (1/3) node {$\ast$};
}}
\subfloat[Inverted Bottleneck]{
\label{fig:invbottleneck}
\diagram{
    \draw (0,0) -- (1.5,1.5) -- (4.5,0) -- (1.5,-1.5) -- (0,0);
    \draw (0,0) -- (1.5,0);
    \draw (1.5,0) -- (4.5,0);
    \draw (1.5,1.5) -- (1.5,2.5);
    \draw (1.5,-1.5) -- (1.5,-2.5);
    \draw (6,0) -- (7.5,0);
    \draw (3,0) parabola bend (4.5,1) (6,0);
    \draw[ten] (0,0) circle (1/2) node {$\tsr{X}$};
    \draw[fill=white] (1.5,1.5) circle (1/3) node {$\ast$};
    \draw[fill=white] (1.5,-1.5) circle (1/3) node {$\ast$};
    \draw[ten] (1.5,0) circle (1/2) node {};
    \draw[ten] (4.5,0) circle (1/2) node {};
    \draw[ten] (6,0) circle (1/2) node {};
    \fill (3,0) circle (3pt);
}}

\subfloat[Factoring]{
\label{fig:factoring}
\diagram{
    \draw (0,0) -- (1.5,1.5) -- (1.5,0) -- (1.5,-1.5) -- (0,0);
    \draw (1.5,0) -- (3,0);
    \draw (1.5,1.5) -- (3,1.5) -- (3,0) -- (3,-1.5) -- (3,-1.5) -- (1.5,-1.5);
    \draw (0,0) -- (1.5,0);
    \draw (3,1.5) -- (3,2.5);
    \draw (3,-1.5) -- (3,-2.5);
    \draw (3,0) -- (4.5,0);
    \draw[ten] (0,0) circle (1/2) node {$\tsr{X}$};
	\draw[ten] (1.5,0) circle (1/2) node {};
	\draw[ten] (3,0) circle (1/2) node {};
	\draw[fill=white] (3,1.5) circle (1/3) node {$\ast$};
	\draw[fill=white] (3,-1.5) circle (1/3) node {$\ast$};
	\draw[fill=white] (1.5,1.5) circle (1/3) node {$\ast$};
	\draw[fill=white] (1.5,-1.5) circle (1/3) node {$\ast$};
}}
\subfloat[Flattened]{
\label{fig:flattened}
\diagram{
    \draw (0,0) -- (1.5,1.5) -- (3,1.5) -- (4.5,0) -- (3,-1.5) -- (1.5,-1.5) -- (0,0);
    \draw (0,0) -- (3,0) -- (4.5,0);
    \draw (1.5,1.5) -- (1.5,2.5);
    \draw (1.5,-1.5) -- (1.5,-2.5);
    \draw (4.5,0) -- (5.5,0);
    \draw[ten] (0,0) circle (1/2) node {$\tsr{X}$};
    \draw[fill=white] (1.5,1.5) circle (1/3) node {$\ast$};
    \draw[fill=white] (1.5,-1.5) circle (1/3) node {$\ast$};
    \draw[ten] (3,1.5) circle (1/2) node {};
    \draw[ten] (3,-1.5) circle (1/2) node {};
    \draw[ten] (3,0) circle (1/2) node {};
    \fill (4.5,0) circle (3pt);
}}
\subfloat[CP]{
\label{fig:cp}
\diagram{
    \draw (0,0) -- (1.5,1.5) -- (3,1.5) -- (4.5,0) -- (3,-1.5) -- (1.5,-1.5) -- (0,0);
    \draw (0,0) -- (3,0) -- (7.5,0);
    \draw (1.5,1.5) -- (1.5,2.5);
    \draw (1.5,-1.5) -- (1.5,-2.5);
    \draw (4.5,0) -- (5.5,0);
    \draw[ten] (0,0) circle (1/2) node {$\tsr{X}$};
    \draw[fill=white] (1.5,1.5) circle (1/3) node {$\ast$};
    \draw[fill=white] (1.5,-1.5) circle (1/3) node {$\ast$};
    \draw[ten] (3,1.5) circle (1/2) node {};
    \draw[ten] (3,-1.5) circle (1/2) node {};
    \draw[ten] (3,0) circle (1/2) node {};
    \draw[ten] (6,0) circle (1/2) node {};
    \fill (4.5,0) circle (3pt);
}}
\subfloat[Low-rank Filter]{
\label{fig:low-rank}
\diagram{
    \draw (0,0) -- (1.5,1.5) -- (1.5,0) -- (3,0) -- (1.5,-1.5) -- (0,0) -- (4.5,0);
    \draw (1.5,1.5) -- (1.5,2.5);
    \draw (1.5,-1.5) -- (1.5,-2.5);
    \draw[ten] (0,0) circle (1/2) node {$\tsr{X}$};
	\draw[ten] (3,0) circle (1/2) node {};
	\draw[ten] (1.5,0) circle (1/2) node {};
	\draw[fill=white] (1.5,1.5) circle (1/3) node {$\ast$};
	\draw[fill=white] (1.5,-1.5) circle (1/3) node {$\ast$};
}}

%
%
\caption{Visualizing linear structures in various convolutional layers, where $\tsr{X}$ is input and $\tsr{T}$ is a convolution kernel. The ``legs'' $h',w',c'$ respectively represent the spatial height, spatial width, and output channels. We will further explain these diagrams in Section~\ref{sec:tensornetworks}.} 
    \label{fig:tensornetworks}
\end{figure}

\textbf{Notation} We use the notation $[n]=\{1,\dots,n\}$, where $n$ is a positive integer. Lower-case letters denote scalars when in ordinary type, vectors when in bold (e.g., $a, \mtr{a}$). Upper-case letters denote matrices when in bold, tensors when in bold script (e.g. $\mtr{A}, \tsr{A}$).
\section{Preliminaries} 

\subsection{Convolution in Neural Networks}

Consider a 2D image of height $H\in\mathbb{N}$, width $W\in\mathbb{N}$, and number of channels $C\in\mathbb{N}$, where a channel is a feature (e.g., R, G, or B) possessed by each pixel. The image can be represented by a three-way tensor $\tsr{X}\in\mathbb{R}^{H\times W\times C}$. Typically, the convolution operation, applied to such a tensor, will change the size and the number of channels. We assume that the size of the convolution filter is odd. Let $I,J\in\{1,3,5,\dots\}$ be the filter’s height and width, $P\in\mathbb{N}$ be the padding size, and $S\in\mathbb{N}$ be the stride. Then, the output has height $H'=(H+2P-I)/S+1$ and width $W'=(W+2P-J)/S+1$. When we set the number of output channels to $C'\in\bbN$, the convolution layer yields an output $\tsr{Z} \in \mathbb{R}^{H'\times W'\times C'}$ in which each element is given as
\begin{align}\label{eq:conv}
    z_{h'w'c'} = \sum_{i\in[I]} \sum_{j\in[J]} \sum_{c\in[C]}
        t_{ijcc'} x_{h'_i w'_j c},
\end{align}
Here $\tsr{T}\in\mathbb{R}^{I\times J\times C\times C'}$ is a weight, which is termed as the $I\times J$ kernel, and $h'_i=(h'-1)S +i-P$ and $w'_j=(w'-1)S +j-P$ are spatial indices used for convolution. For simplicity, we omit the bias parameter. There are $IJCC'$ parameters, and the time complexity of \eqref{eq:conv} is $O(IJCH'W'C')$.

Although \eqref{eq:conv} is standard, there are several important special cases used to reduce computational complexity. The case when $I=J=1$ is called $1\times 1$ convolution~\cite{lin2013network,szegedy2015going}; it applies a linear transformation to the channels only, and does not affect the spatial directions. Depthwise convolution~\cite{chollet2016xception} is arguably the opposite of $1\times 1$ convolution: it works as though the input and output channels (rather than the spatial dimensions) are one dimensional, i.e., 
\begin{align}\label{eq:depthwise}
    z_{h'w'c'} = \sum_{i\in[I]} \sum_{j\in[J]} 
        t_{ijc'} x_{h'_i w'_j c'}.
\end{align}

\subsection{Tensor Decomposition in Convolution}

To reduce computational complexity, \citet{kim2015compression} applied Tucker-2 decomposition~\cite{tucker1966some} to the kernel $\tsr{T}$, replacing the original kernel by $\tsr{T}^{\mathrm{T2}}$, where each element is given by
\begin{align}\label{eq:tucker2}
t^{\mathrm{T2}}_{ijcc'} = \sum_{\alpha\in[A]}\sum_{\beta\in[B]}g_{ij\alpha\beta} u_{c\alpha} v_{c'\beta}.
\end{align}
Here $\tsr{G}\in\bbR^{I\times J\times A\times B}, \mtr{U}\in\bbR^{C\times A}, \mtr{V}\in\bbR^{C'\times B}$ are new parameters, and $A,B\in\bbN$ are rank-like hyperparameters. Note that convolution with the Tucker-2 kernel $\tsr{T}^{\mathrm{T2}}$ is equivalent to three consecutive convolutions: $1\times 1$ convolution with kernel $\mtr{U}$, $I \times J$ convolution with kernel $\tsr{G}$, and $1\times 1$ convolution with kernel $\mtr{V}$. The hyperparameters $A$ and $B$ may be viewed as intermediate channels during the three convolutions. Hence, when $A,B$ are smaller than $C,C'$, a cost reduction is expected, because the heavy $I \times J$ convolution is now being taken with the $A,B$ channel pair instead of with $C,C'$. The reduction ratios of the number of parameters and the inference cost for the Tucker-2 decomposition compared to the original are both at least $AB/CC'$.

Similarly, several authors~\cite{denton2014exploiting,lebedev2014speeding} have employed CP decomposition~\cite{hitchcock1927expression}, which reparametrizes the kernel as 
\begin{align}\label{eq:cp}
t^{\mathrm{CP}}_{ijcc'}=\sum_{\gamma\in[\Gamma]} \tilde{u}_{i\gamma} \tilde{v}_{j\gamma} \tilde{w}_{c\gamma} \tilde{s}_{c'\gamma},
\end{align}
where $\tilde{\mtr{U}}, \tilde{\mtr{V}}, \tilde{\mtr{W}}, \tilde{\mtr{S}}$ are new parameters and $\Gamma\in\bbN$ is a hyperparameter.
\section{The Einconv Layer}
\label{sec:tensornetworks}

We have seen that both the convolution operation \eqref{eq:conv},~\eqref{eq:depthwise} and the decompositions of the kernel \eqref{eq:tucker2},~\eqref{eq:cp} are given as the sum-product of tensors with many indices. Although the indices may cause expressions to appear cluttered, they play important roles. There are two classes of indices: those connected to the output shape ($h',w',c'$) and those used for summation ($i,j,c,\alpha,\beta,\gamma$). Convolution and its decomposition are specified by how the indices interact and are distributed into tensor variables. For example, in Tucker-2 decomposition, the spatial, input channel, and output channel information $\tsr{G},\mtr{U},\mtr{V}$ are separated through their respective indices $(i,j),c',c$. Moreover, they are joined by two-step connections: the input channel and spatial information are connected by $\alpha$, and the output channel and spatial information by $\beta$. Here, we can consider the summation indices to be paths that deliver input information to the output. 

A hypergraph captures the index interaction in a clean manner. The basic idea is that tensors are distinguished only by the indices they own and we consider them as vertices. Vertices are connected if the corresponding tensors share indices to be summed. (For notational simplicity, we will often refer to a tensor by its indices alone, i.e., $\tsr{U}=(u_{abc})_{a\in[A],b\in[B],c\in[C]}$ is equivalent to $\set{a,b,c}$.)  

As an example, consider the decomposition of a kernel $\tsr{T}$. Let the \emph{outer indices} $\outers = \{i,j,c,c'\}$ be the indices of the shape of $\tsr{T}$, the \emph{inner indices} $\inners = \tuple{r_1,r_2,\dots}$ be the indices used for summation, and \emph{inner dimensions} $\ranks = \tuple{R_1,R_2,\dots}\in\bbR^{|\inners|}$ be the dimensions of $\inners$. Assume that $M\in\bbN$ tensors are involved in the decomposition, and let $\vset = \set{v_1,\dots,v_M \mid v_m \in 2^{\outers\cup\inners}}$ denote the set of these tensors, where $2^{\mathcal{A}}$ denotes the power set of a set $\mathcal{A}$. Given $\vset$, each inner index $r\in\inners$ defines a hyperedge $e_r = \{v \mid r \in v \text{~~for~~} v\in\vset \}$. Let $\eset=\set{e_n \mid n\in\outers\cup\inners}$ denote the set of hyperedges. For example, suppose $\inners = \set{\alpha,\beta}$ and $\vset = \set{\set{i,j,\alpha,\beta},\set{c,\alpha},\set{c',\beta}}$; then, the undirected weighted hypergraph $(\vset,\eset,\ranks)$ is equivalent to Tucker-2 decomposition~\eqref{eq:tucker2}. 

This idea is also applicable to the convolution operation by the introduction of dummy tensors that absorb the index patterns used in convolution. Recall that in \eqref{eq:conv} the special index $h'_i$ indicates which vertical elements of the kernel and the input image are coupled in the convolution. Let $\tsr{P}\in\{0,1\}^{H\times H'\times I}$ be a (dummy) binary tensor where each element is defined as $p_{hh'i}=1$ if $h = h'_i$ and $0$ otherwise, and let $\tsr{Q}\in\{0,1\}^{W\times W'\times J}$ be the horizontal counterpart of $\tsr{P}$. Furthermore, let us modify the index sets to $\outers = \set{h',w',c'}$ and $\inners=\tuple{h,w,i,j,c}$, and the dimensions to $\ranks = \tuple{H,W,I,J,C}$. Then, vertices $\vset = \set{\set{h,w,c},\set{i,j,c,c'},\set{h,h',i},\set{w,w',j}}$ and hyperedges $\eset$ that are automatically defined by $\vset$ exactly represent the convolution operation~\eqref{eq:conv}, where we ensure that the tensor of $\set{h,h',i}$ is fixed by $\tsr{P}$ and the tensor of $\set{w,w',j}$ is fixed by $\tsr{Q}$.

The above mathematical explanation may sound too winding, but visualization will help greatly. Let us introduce several building blocks for the visualization. Let a circle (vertex) indicate a tensor, and a line (edge) connected to the circle indicate an index associated with that tensor. When an edge is connected on only one side, it corresponds to an outer index of the tensor; otherwise, it corresponds to an inner index used for summation. The summation and elimination of inner indices is called \emph{contraction}. For example, 
\begin{align}
    \diagram{
    \draw (0,0) -- (5,0);
    \draw[ten] (1.5,0) circle (1/2) node {$\tsr{A}$};
    \draw[ten] (3.5,0) circle (1/2) node {$\tsr{B}$};
    \draw (0.5,-0.4) node {$\scriptstyle i$};
    \draw (2.5,-0.4) node {$\scriptstyle j$};
    \draw (4.5,-0.4) node {$\scriptstyle k$};
    }
    =
    \diagram{
    \draw (0,0) -- (3,0);
    \draw[ten] (1.5,0) circle (1/2) node {$\tsr{C}$};
    \draw (0.5,-0.4) node {$\scriptstyle i$};
    \draw (2.5,-0.4) node {$\scriptstyle k$};
    }
    \quad\Longleftrightarrow\quad
    \sum_j a_{ij}b_{jk}
    =c_{ik}.
\end{align}
A hyperedge that is connected to more than three vertices is depicted with a black dot:
\begin{align}
    \diagram{
    \draw (0,0) -- (3.5,0);
    \draw (5.5,0) -- (7,0);
    \draw (2.5,0) parabola bend (4,1) (5.5,0);
    \draw[ten] (1.5,0) circle (1/2) node {$\tsr{A}$};
    \draw[ten] (3.5,0) circle (1/2) node {$\tsr{B}$};
    \draw[ten] (5.5,0) circle (1/2) node {$\tsr{C}$};
    \draw (0.5,-0.4) node {$\scriptstyle i$};
    \draw (2.5,-0.4) node {$\scriptstyle j$};
    \draw (6.5,-0.4) node {$\scriptstyle k$};
    \fill (2.5,0) circle (3pt);
    }
    \quad\Longleftrightarrow\quad
    \sum_j a_{ij}b_{j}c_{jk}.
\end{align}
Finally, a node with symbol ``$\ast$'' indicates a dummy tensor. In our context, this implicitly indicates that some spatial convolution is involved:
\begin{align}
    \diagram{
    \draw (0,0) -- (5,0);
    \draw (2.5,0) -- (2.5,1.5);
    \draw[ten] (0.5,0) circle (1/2) node {$\tsr{A}$};
    \draw[fill=white] (2.5,0) circle (1/3) node {$\ast$};
    \draw[ten] (4.5,0) circle (1/2) node {$\tsr{B}$};
    \draw (1.5,-0.4) node {$\scriptstyle h$};
    \draw (3.5,-0.4) node {$\scriptstyle i$};
    \draw (3,1) node {$\scriptstyle h'$};
    }
    \quad\Longleftrightarrow\quad
    \sum_{h,i} p_{hh'i}a_{h}b_{i}
\end{align}
The use of a single hyperedge to represent the summed inner index is the graphical equivalent of the Einstein summation convention in tensor algebra. Inspired by this equivalence and by NumPy's \texttt{einsum} function~\cite{einsum}, we term a hypergraphically-representable convolution layer an \emph{Einconv} layer.

\subsection{Examples}

In Figure~\ref{fig:tensornetworks}, we give several examples of hypergraphical notation. Many existing CNN modules can obviously be described as Einconv layers (but without nonlinear activation). 

\paragraph{Separable and Low-rank Filters}
Although a kernel is usually square, i.e., $I = J$, we often take the convolution separately along the vertical and horizontal directions. In this case, the convolution operation is equivalent to the application of two filters of sizes $(I, 1)$ and $(1, J)$. This can be considered the rank-1 approximation of the $I\times J$ convolution. A separable filter~\cite{smith1997scientist} is a technique to speed up convolution when the filter is exactly of rank one. \citet{rigamonti2013learning} extended this idea by approximating filters as low-rank matrices for a single input channel, and \citet{tai2015convolutional} further extended it for multiple input channels~(Figure~\ref{fig:low-rank}).  

\paragraph{Factored Convolution}
In the case of a large filter size, factored convolution is commonly used to replace the large filter with multiple small-sized convolutions~\cite{szegedy2016rethinking}. For example, two consecutive $3\times 3$ convolutions are equivalent to one $5\times 5$ convolution in which the first $3 \times 3$ filter has been enlarged by the second $3\times 3$ filter. Interestingly, the factorization of convolution is exactly represented in Einconv by adding two additional dummy tensors. 

\paragraph{Bottleneck Layers}
In ResNet~\cite{he2015deep}, the bottleneck module is used as a building block: input channels are reduced before convolution, and then expanded afterwards. Finally, a skip connection is used, re-adding the original input. Figure~\ref{fig:bottleneck} shows the module without the skip connection. From the diagram, we see that the linear structure of the bottleneck is equivalent to Tucker-2 decomposition.

\paragraph{Depthwise Separable Convolution} Mobilenet V1~\cite{howard2017mobilenets} is a seminal light-weight architecture. It employs depthwise separable convolution~\cite{sifre2014rigid,chollet2016xception} as a building block; this is a combination of depthwise and $1\times 1$ convolution (Figure~\ref{fig:depsep}), and works well with limited computational resources.

\paragraph{Inverted Bottleneck Layers}
Mobilenet V2~\cite{sandler2018mobilenetv2}, the second generation of Mobilenet, employs a building block called the inverted bottleneck module (Figure~\ref{fig:invbottleneck}). It is similar to the bottleneck module, but there are two differences. First, whereas in the bottleneck module, the number of intermediate channels is smaller than the number of input or of output channels, in the inverted bottleneck this relationship is reversed, and the intermediate channels are ``ballooned''. Second, there are two intermediate channels in the bottleneck module, while the inverted bottleneck has only one.

\subsection{Higher Order Convolution}
We have, thus far, considered 2D convolution, but Einconv can naturally handle higher-order convolution. For example, consider a 3D convolution. Let $d,d'$ be the input/output indices for depth, and $k$ be the index of filter depth. Then, by adding $d'$ to $\outers$ and $d,k$ to $\inners$, we can construct a hypergraph for 3D convolution. Figure~\ref{fig:tensornetworks_3d} shows the hypergraphs for the the standard 3D convolution and for two light-weight convolutions: the depthwise separable convolution~\cite{kopuklu2019resource}, and the (2+1)D convolution~\cite{tran2018closer} which factorizes a full 3D convolution into 2D and 1D convolutions.

\begin{figure}
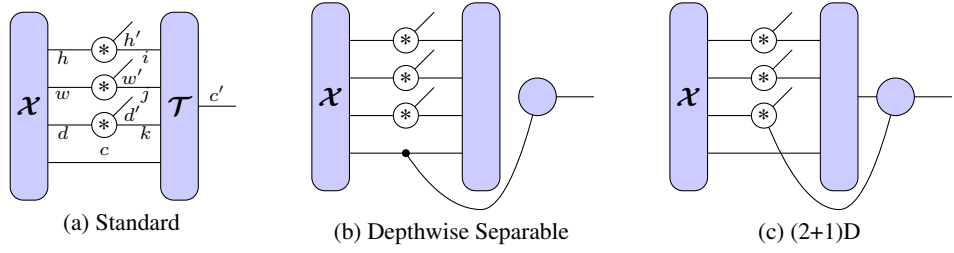

\centering
\subfloat[Standard]{
\label{fig:standard3d}
\diagram{
    \draw[ten,rounded corners=5pt] (0,0)--(0,5)--(1,5)--(1,0)-- cycle;
    \draw[ten,rounded corners=5pt] (4,0)--(4,5)--(5,5)--(5,0)-- cycle;
    \draw (1,1) -- (4,1);
    \draw (1,2) -- (4,2);
    \draw (1,3) -- (4,3);
    \draw (1,4) -- (4,4);
    \draw (2.5,2) -- (3.25,2.75);
    \draw (2.5,3) -- (3.25,3.75);
    \draw (2.5,4) -- (3.25,4.75);
    \draw (5,2.5) -- (6,2.5);
    \draw (0.5,2.5) node {$\tsr{X}$};
    \draw (4.5,2.5) node {$\tsr{T}$};
    \draw[fill=white] (2.5,2) circle (1/3) node {$\ast$};
    \draw[fill=white] (2.5,3) circle (1/3) node {$\ast$};
    \draw[fill=white] (2.5,4) circle (1/3) node {$\ast$};
    \draw (1.4,3.8) node {$\scriptstyle h$};
    \draw (1.4,2.8) node {$\scriptstyle w$};
    \draw (1.4,1.8) node {$\scriptstyle d$};
    \draw (3.6,3.8) node {$\scriptstyle i$};
    \draw (3.6,2.8) node {$\scriptstyle j$};
    \draw (3.6,1.8) node {$\scriptstyle k$};
    \draw (3.25,4.3) node {$\scriptstyle h'$};
    \draw (3.25,3.3) node {$\scriptstyle w'$};
    \draw (3.25,2.3) node {$\scriptstyle d'$};
    \draw (2.5,1.3) node {$\scriptstyle c$};
    \draw (5.5,2.85) node {$\scriptstyle c'$};
}}\hspace{2em}
\subfloat[Depthwise Separable]{
\label{fig:depsep3d}
\diagram{
    \draw[ten,rounded corners=5pt] (0,0)--(0,5)--(1,5)--(1,0)-- cycle;
    \draw[ten,rounded corners=5pt] (4,0)--(4,5)--(5,5)--(5,0)-- cycle;
    \draw (1,1) -- (4,1);
    \draw (1,2) -- (4,2);
    \draw (1,3) -- (4,3);
    \draw (1,4) -- (4,4);
    \draw (2.5,2) -- (3.25,2.75);
    \draw (2.5,3) -- (3.25,3.75);
    \draw (2.5,4) -- (3.25,4.75);
    \draw (6,2.5) -- (7.5,2.5);
    \draw (2.5,1) parabola bend (4.5,-0.5) (6,2.5);
    \draw (0.5,2.5) node {$\tsr{X}$};
    \draw[fill=white] (2.5,2) circle (1/3) node {$\ast$};
    \draw[fill=white] (2.5,3) circle (1/3) node {$\ast$};
    \draw[fill=white] (2.5,4) circle (1/3) node {$\ast$};
    \draw[ten] (6,2.5) circle (1/2) node {};
    \fill (2.5,1) circle (3pt);
}}\hspace{2em}
\subfloat[(2+1)D]{
\label{fig:(2+1)d}
\diagram{
    \draw[ten,rounded corners=5pt] (0,0)--(0,5)--(1,5)--(1,0)-- cycle;
    \draw[ten,rounded corners=5pt] (4,0)--(4,5)--(5,5)--(5,0)-- cycle;
    \draw (1,1) -- (4,1);
    \draw (1,2) -- (2.5,2);
    \draw (1,3) -- (4,3);
    \draw (1,4) -- (4,4);
    \draw (2.5,2) -- (3.25,2.75);
    \draw (2.5,3) -- (3.25,3.75);
    \draw (2.5,4) -- (3.25,4.75);
    \draw (5,2.5) -- (7.5,2.5);
    \draw (2.5,2) parabola bend (4.5,-0.5) (6,2.5);
    \draw (0.5,2.5) node {$\tsr{X}$};
    \draw[fill=white] (2.5,2) circle (1/3) node {$\ast$};
    \draw[fill=white] (2.5,3) circle (1/3) node {$\ast$};
    \draw[fill=white] (2.5,4) circle (1/3) node {$\ast$};
    \draw[ten] (6,2.5) circle (1/2) node {};
}}
\caption{Graphical visualizations of 3D convolutions.} 
    \label{fig:tensornetworks_3d}
\end{figure}

\subsection{Reduction and Enumeration}\label{sec:reduction}
Although the hypergraphical notation is powerful, we need to be careful about its redundancy. For example, consider a hypergraph $(\vset,\eset)$ where an inner index $a\in\inners$ is only used by the $m$-th tensor, i.e, $a\in v_m$ and $a\notin v_n$ for $n\neq m$. Then, any tensors represented by $(\vset,\eset)$, whatever their inner dimensions, are also represented by removing $a$ from every element of $\vset$ and the $a$-th hyperedge from $\eset$. Similarly, self loops do not increase the representability~\cite{ye2018tensor}. In terms of representability of the Einconv layer, there is no reason to choose redundant hypergraphs.\footnote{It might be possible that some redundant Einconv layer outperforms equivalent nonredundant ones, because parametrization influences optimization. However, we focus here on representability alone.} We therefore want to remove them efficiently.

For simplicity, let us consider the 2D convolution case, in which the results are straightforwardly extensible to higher-order cases.
Let $\setminus$ denote the set difference operator and $\obackslash$ denote the element-wise set difference operator, which is used to remove an index from all vertices, e.g., $\vset \obackslash a=\set{v_1\setminus a,\dots,v_M\setminus a}$ for index $a\in\outers\cup\inners$. For convenience, we define a map $\theta:\outers\cup\inners\to \bbN$ that returns the dimension of an index $a\in\outers\cup\inners$, e.g. $\theta(i)=I$. To discuss representability, we introduce the following notation for the space of Einconv layers:
\begin{definition} Given vertices $\vset=\set{v_1,\dots,v_M}$ and inner dimensions $\ranks$, let $\tsr{F}_{\vset}: \bbR^{\bigtimes_{a\in v_1}\theta(a)}\times \cdots \times \bbR^{\bigtimes_{a\in v_M}\theta(a)} \to \bbR^{I\times J\times C\times C'}$ be the contraction of $M$ tensors along with $\vset$. In addition, let $\tnspace_\vset(\ranks)\subseteq \bbR^{I\times J\times C\times C'}$ be the space that $\tsr{F}_\vset$ covers, i.e., $\tnspace_\vset(\ranks) = \set{\tsr{F}_\vset(\tsr{U}_1,\dots,\tsr{U}_M)\mid \tsr{U}_m\in\bbR^{\bigtimes_{a\in v_m}\theta(a)} ~~\text{for}~~m\in[M]}$.
\end{definition}
Next, we show several sufficient conditions for hypergraphs to be redundant.

\begin{proposition}[{\citealt[Proposition~3.5]{ye2018tensor}}]\label{prop:rank1}
Given inner dimensions $\ranks\in\bbR^{|\inners|}$, if $R_a=1$, $\tnspace_\vset(\ranks)$ is equivalent to $\tnspace_{\vset \obackslash a}(\dots,R_{a-1}, R_{a+1},\dots)$.
\end{proposition}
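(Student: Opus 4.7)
The plan is to prove the equality $\tnspace_\vset(\ranks) = \tnspace_{\vset \obackslash a}(\ranks')$, where $\ranks'$ denotes $\ranks$ with the $a$-th entry removed, by demonstrating inclusion in both directions through an explicit bijection between the tensor families on either side. The underlying intuition is that, since $R_a = 1$, the index $a$ takes only a single value, so summing over $a$ reduces to a single term, and every slice of a tensor along the $a$ axis is the entire tensor.

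For the forward inclusion $\tnspace_\vset(\ranks) \subseteq \tnspace_{\vset \obackslash a}(\ranks')$, I would take any output $\tsr{T} = \tsr{F}_\vset(\tsr{U}_1,\dots,\tsr{U}_M)$ and construct tensors $\tsr{U}'_m$ indexed by $v_m \setminus a$. For each $m$ with $a \in v_m$, define $\tsr{U}'_m$ by fixing the $a$-coordinate to its unique value $1$; for $m$ with $a \notin v_m$, set $\tsr{U}'_m = \tsr{U}_m$. Writing out the defining contraction of $\tsr{F}_\vset$, the sum $\sum_{a=1}^{R_a}$ collapses to a single term because $R_a=1$, so the contraction equals $\tsr{F}_{\vset \obackslash a}(\tsr{U}'_1,\dots,\tsr{U}'_M)$ elementwise, giving $\tsr{T} \in \tnspace_{\vset \obackslash a}(\ranks')$.

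For the reverse inclusion, I would start with any $\tsr{T} \in \tnspace_{\vset \obackslash a}(\ranks')$ realized by some $(\tsr{U}'_1,\dots,\tsr{U}'_M)$ and lift each $\tsr{U}'_m$ back into $\bbR^{\bigtimes_{b \in v_m}\theta(b)}$. Concretely, for $m$ with $a \in v_m$, place $\tsr{U}'_m$ into the unique $a$-slice of a fresh tensor $\tsr{U}_m$ of the enlarged shape (which is valid since $\theta(a) = R_a = 1$); otherwise set $\tsr{U}_m = \tsr{U}'_m$. Again the summation over $a$ is a single term, so $\tsr{F}_\vset(\tsr{U}_1,\dots,\tsr{U}_M) = \tsr{T}$, yielding the second inclusion.

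Neither direction requires any real mathematical work; the proof is essentially bookkeeping about the fact that a one-dimensional axis is transparent to both summation and indexing. The only mild obstacle is notational: one must handle uniformly the case where $a$ appears in several vertices versus only some, and make sure the map between $\bbR^{\bigtimes_{b \in v_m}\theta(b)}$ and $\bbR^{\bigtimes_{b \in v_m \setminus a}\theta(b)}$ is invoked consistently across all $m$. I would also briefly note that the argument is symmetric in the two directions, so one could even phrase the whole proof as establishing a canonical isomorphism of parameter spaces under which $\tsr{F}_\vset$ and $\tsr{F}_{\vset \obackslash a}$ agree.
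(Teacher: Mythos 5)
Your proof is correct. Note that the paper itself does not prove this proposition --- it is imported verbatim from \citet[Proposition~3.5]{ye2018tensor} and the appendix only supplies proofs for Propositions~\ref{prop:vertexsubset}--\ref{prop:filtersize} --- but your argument is the standard one and is complete in substance: the canonical isomorphism $\bbR^{\bigtimes_{b\in v_m}\theta(b)}\cong\bbR^{\bigtimes_{b\in v_m\setminus a}\theta(b)}$ when $\theta(a)=R_a=1$, applied uniformly to every vertex containing $a$, makes $\tsr{F}_\vset$ and $\tsr{F}_{\vset\obackslash a}$ agree because the sum over the singleton range of $a$ is a single term, and this gives both inclusions at once.
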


\begin{proposition}\label{prop:vertexsubset} 
If $v_m\subseteq v_n$ for some $m,n\in[M]$, $\tnspace_{\vset}(\ranks)$ is equivalent to $\tnspace_{\vset\setminus v_m}(\ranks)$.
\end{proposition}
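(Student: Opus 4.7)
The plan is to prove the set equality $\tnspace_{\vset}(\ranks) = \tnspace_{\vset\setminus v_m}(\ranks)$ by establishing both inclusions explicitly. The key observation driving both directions is that because $v_m \subseteq v_n$, the tensor $\tsr{U}_m$ shares all of its index modes with $\tsr{U}_n$, so its information can be absorbed into $\tsr{U}_n$ (and conversely reintroduced as a trivial factor) without altering which inner indices get summed over in the contraction.

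For the inclusion $\tnspace_\vset(\ranks) \subseteq \tnspace_{\vset\setminus v_m}(\ranks)$, given any tensors $\tsr{U}_1,\dots,\tsr{U}_M$ with $\tsr{U}_k \in \bbR^{\bigtimes_{a\in v_k}\theta(a)}$, I would define a new tensor $\tsr{U}'_n \in \bbR^{\bigtimes_{a\in v_n}\theta(a)}$ by the entrywise product
\begin{equation*}
(\tsr{U}'_n)_{v_n} = (\tsr{U}_n)_{v_n} \cdot (\tsr{U}_m)_{v_m},
\end{equation*}
where the factor indexed by $v_m$ is constant along the modes in $v_n \setminus v_m$ (broadcast). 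Then substituting $\tsr{U}'_n$ for $\tsr{U}_n$ in $\tsr{F}_{\vset\setminus v_m}$ and using that the product of tensors inside the contraction sum is commutative, one verifies $\tsr{F}_{\vset\setminus v_m}(\dots,\tsr{U}'_n,\dots) = \tsr{F}_\vset(\tsr{U}_1,\dots,\tsr{U}_M)$. Crucially, the set of inner indices summed in both contractions is identical (namely all of $\inners$), since every $r\in v_m$ still appears in $v_n$ after removal.

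For the reverse inclusion $\tnspace_{\vset\setminus v_m}(\ranks) \subseteq \tnspace_\vset(\ranks)$, given tensors realizing some $\tsr{T}$ through $\tsr{F}_{\vset\setminus v_m}$, I would simply define $\tsr{U}_m$ to be the all-ones tensor of shape $\bigtimes_{a\in v_m}\theta(a)$. Inserting this factor into the contraction multiplies every summand by $1$, leaving the sum unchanged; again, the summation domain over $\inners$ is the same because $v_m\subseteq v_n$ ensures no inner index is newly introduced or lost by including $\tsr{U}_m$.

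The argument is essentially bookkeeping, so the main ``obstacle'' is notational rather than mathematical: one must state the broadcasting of $\tsr{U}_m$ along the extra modes $v_n\setminus v_m$ carefully, and confirm that the hyperedge set $\eset$ induced by $\vset\setminus v_m$ still treats each index in $\inners$ as a summation index (this is where the hypothesis $v_m\subseteq v_n$ is doing the real work, ensuring no inner index becomes orphaned). Once that is in place, both directions follow from the associativity and commutativity of the sum-product expansion of $\tsr{F}$.
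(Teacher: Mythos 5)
Your proof is correct and follows essentially the same route as the paper: the forward inclusion is obtained exactly as in the paper's proof, by absorbing $\tsr{U}_m$ into $\tsr{U}_n$ via a broadcast element-wise product over the shared modes $v_m\subseteq v_n$. You additionally spell out the reverse inclusion with the all-ones tensor, which the paper leaves implicit, so your version is if anything slightly more complete.
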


\begin{proposition}\label{prop:sameedge}
If $e_a = e_b$ for $a,b\in\inners$, $\tnspace_{\vset}(\ranks)$ is equivalent to $\tnspace_{\vset\obackslash a}(\tilde{\ranks})$ where $\tilde{\ranks}=(\dots,R_{a-1},R_{a+1},\dots,R_{b-1},R_{a}R_b,R_{b+1},\dots)$.
\end{proposition}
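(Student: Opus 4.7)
The plan is to prove the two inclusions $\tnspace_\vset(\ranks) \subseteq \tnspace_{\vset\obackslash a}(\tilde\ranks)$ and $\tnspace_{\vset\obackslash a}(\tilde\ranks) \subseteq \tnspace_\vset(\ranks)$ by an explicit reshaping bijection on the tensors incident to the shared hyperedge. Because $e_a = e_b$, the indices $a$ and $b$ appear in exactly the same subset of vertices; say this common subset is $\vset^\star = \set{v_{m_1}, \dots, v_{m_k}}$. For every tensor $\tsr{U}_{m_j}$ on $\vset^\star$, the pair of axes corresponding to $(a,b)$ can be canonically identified with a single axis of dimension $R_a R_b$ via the standard isomorphism $\phi:\bbR^{R_a \times R_b} \cong \bbR^{R_a R_b}$, applied axis-wise while leaving the other indices untouched. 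This yields a bijection between the parameter space of $(\vset,\ranks)$ and that of $(\vset\obackslash a, \tilde\ranks)$.

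Next I would verify that contraction is preserved under this bijection. In the original sum defining $\tsr{F}_\vset$, the indices $a$ and $b$ only ever occur together inside the tensors in $\vset^\star$, so the relevant factor in the contraction is
\begin{equation}
\sum_{a\in[R_a]}\sum_{b\in[R_b]} \prod_{j=1}^{k} (u_{m_j})_{\dots\, a\, \dots\, b\, \dots}.
\end{equation}
Under the identification $\phi$, the joint index $(a,b)$ becomes a single index $ab \in [R_a R_b]$, and the double sum collapses into a single sum $\sum_{ab\in[R_a R_b]}$ over exactly the same multiset of values. All other tensors and all other inner sums in the contraction are unaffected, since $a,b \notin v$ for any $v\notin\vset^\star$. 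Hence $\tsr{F}_\vset(\tsr{U}_1,\dots,\tsr{U}_M) = \tsr{F}_{\vset\obackslash a}(\phi(\tsr{U}_1),\dots,\phi(\tsr{U}_M))$, where $\phi$ acts as the identity on tensors outside $\vset^\star$. Each inclusion then follows by taking images under $\phi$ and $\phi^{-1}$ respectively.

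The only real subtlety, and the step I would write out most carefully, is checking that $e_a = e_b$ really does guarantee that $a$ and $b$ are axes of the \emph{same} tensors (rather than merely appearing in tensors that share some edges), so that the reshape $\phi$ is well-defined on every relevant vertex and the contraction pattern outside $\vset^\star$ is untouched. This follows directly from the definition $e_r = \{v \mid r \in v\}$: equality of hyperedges as vertex sets means $a\in v \Leftrightarrow b\in v$ for every $v\in\vset$. Everything else is routine index bookkeeping, and the index list $\tilde\ranks$ in the statement is exactly what results from deleting $R_a$ at position $a$ and replacing $R_b$ by $R_a R_b$ at position $b$.
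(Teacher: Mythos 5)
Your proposal is correct and matches the paper's argument: your reshaping isomorphism $\phi$ is exactly the paper's collapsing operator $\mathrm{Col}(\tsr{U}_m,a,b)$ that concatenates the paired axes into a single axis of dimension $R_aR_b$, with the contraction preserved because the double sum over $(a,b)$ collapses to a single sum. If anything, you are slightly more careful than the paper in noting that the map is a bijection, which is what yields both inclusions and hence the claimed equivalence.
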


\begin{proposition}\label{prop:filtersize}
Assume the convolution is size-invariant, i.e., $H=H'$ and $W=W'$. Then, given filter height and width $I,J\in\set{1,3,5,\dots}$, the number of possible combinations that eventually achieve $I\times J$ convolution is $\pi(\frac{I-1}{2})\pi(\frac{J-1}{2})$, where $\pi:\bbN\to\bbN$ is the partition function of integers. (See \cite{oeisA000041} for examples.) 
\end{proposition}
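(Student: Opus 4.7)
The plan is to reduce the count to the classical integer partition function by analyzing chains of size-invariant convolutions. First, I would verify that contracting a chain of dummy tensors $\tsr{P}_1,\dots,\tsr{P}_k$, where $\tsr{P}_t\in\set{0,1}^{H\times H\times I_t}$ shares intermediate spatial indices with its neighbours, yields a single dummy tensor for a convolution of odd filter height $\sum_{t=1}^{k} I_t - (k-1)$. This is the main technical obstacle: it amounts to a direct combinatorial identity on the supports of binary tensors, but its correctness leans crucially on the size-invariance assumption $H=H'$, since otherwise boundary truncation would shrink the composite support and the identity would fail.

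Second, any factor with $I_t = 1$ gives a dummy $\tsr{P}_t$ that collapses to an identification of its two neighbouring spatial indices, and is therefore removable by Proposition~\ref{prop:vertexsubset}. Hence within nonredundant hypergraphs we may assume $I_t \geq 3$ for every $t$, i.e., $m_t := (I_t - 1)/2 \in \bbN_{>0}$. The requirement that the chain realises an $I$-wide filter then becomes $\sum_{t=1}^{k} m_t = (I-1)/2$, with $k$ unbounded and the empty chain ($k=0$) handling the boundary case $I=1$ via the convention $\pi(0)=1$.

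Third, two chains that differ only in the order of $(m_1,\dots,m_k)$ produce isomorphic Einconv hypergraphs, and therefore represent the same tensor space $\tnspace_\vset(\ranks)$ by associativity and commutativity of the composed convolutions. Counting distinct nonredundant height-factorings thus reduces to counting unordered tuples of positive integers summing to $(I-1)/2$, which by definition is $\pi((I-1)/2)$. Finally, the height dummies $\tsr{P}_t$ and width dummies $\tsr{Q}_t$ involve disjoint spatial indices and can be chosen independently, so the total count factors as the product $\pi(\frac{I-1}{2})\,\pi(\frac{J-1}{2})$. The extension to $d$-dimensional convolution is then immediate: the same argument applies to each spatial axis, and the total multiplies over all of them.
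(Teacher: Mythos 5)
Your proposal is correct and follows essentially the same route as the paper: both rest on the identity that composing size-invariant convolutions of widths $I_1,\dots,I_k$ yields width $I=1+\sum_t(I_t-1)$, and then count unordered decompositions of $\frac{I-1}{2}$ into positive parts via the partition function. You are in fact somewhat more careful than the paper's terse argument, in that you explicitly justify why trivial ($I_t=1$) factors are excluded (so the parts are positive and the count is finite) and why order is irrelevant.
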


Proposition~\ref{prop:rank1} says that, if the inner dimension of an inner index is one, we can eliminate it from the hypergraph. Proposition~\ref{prop:vertexsubset} shows that, if the indices of a vertex form a subset of the indices of another vertex (e.g. $v_1=\set{a,c}$ and $v_2=\set{a,b,c}$), we can remove the first vertex. Proposition~\ref{prop:sameedge} means that a ``double'' hyperedge on the dimensions $A,B\in\mathbb{N}$ is reduced to a single hyperedge on the dimension $AB$. Proposition~\ref{prop:filtersize} tells us the possible choices of filter size. We defer the proofs to the Supplementary material.
By combining the above propositions, we can obtain the following theorem:
\begin{theorem}\label{thm:countablility}
If the number of inner indices and the filter size is finite, the set of nonredundant hypergraphs representing convolution~\eqref{eq:conv} is finite. 
\end{theorem}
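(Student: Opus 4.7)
The plan is to bound the count of nonredundant hypergraphs by reducing it to a finite combinatorial enumeration, treating the inner dimensions $\ranks$ as tunable hyperparameters external to the structural count.

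First I would fix the filter dimensions $I,J$. By Proposition~\ref{prop:filtersize}, only finitely many factored-convolution realizations achieve a given $I\times J$ kernel, yielding a finite number of admissible dummy-tensor configurations ($\tsr{P},\tsr{Q}$ and their factored counterparts). For each such configuration the outer index set $\outers$ is fixed and the inner index set $\inners$ is finite by hypothesis; set $N=|\outers\cup\inners|$. Every vertex $v\in\vset$ is then a subset of this finite ground set, giving at most $2^N$ candidate vertices. Proposition~\ref{prop:vertexsubset} forces a nonredundant $\vset$ to be an antichain in the Boolean lattice $(2^{\outers\cup\inners},\subseteq)$, and the number of antichains over a finite ground set is finite (bounded by the Dedekind number of order $N$). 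The hyperedges are then determined by $\vset$ via $e_r=\set{v\in\vset\mid r\in v}$, so once $\vset$ is fixed no further structural freedom remains. Propositions~\ref{prop:rank1} and~\ref{prop:sameedge} merely prune this enumeration, by eliminating inner indices of unit dimension and by collapsing duplicated incidence patterns into a single inner index whose dimension is the product of the originals; neither proposition adds choices. Summing over the finitely many filter-size configurations from Proposition~\ref{prop:filtersize} gives a finite total.

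The main obstacle is conceptual rather than technical: since the inner dimensions $\ranks$ range over $\bbN^{|\inners|}$ and are individually unbounded, the theorem only makes sense as a statement about structurally distinct hypergraphs, with $\ranks$ regarded as a tunable hyperparameter external to the count. One must therefore verify that Propositions~\ref{prop:rank1}--\ref{prop:sameedge} collectively collapse every pair of genuinely equivalent structures into a single canonical representative, so that the antichain enumeration above does not overcount. A secondary check is to confirm that the dummy tensors introduced by the factored-filter configurations of Proposition~\ref{prop:filtersize} slot into the same antichain framework without double-counting across different factorizations of the same $I\times J$ kernel.
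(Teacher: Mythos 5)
Your argument is correct and follows essentially the same route as the paper: Proposition~\ref{prop:filtersize} to reduce to finitely many spatial/factoring configurations, and Proposition~\ref{prop:vertexsubset} to confine $\vset$ to subsets of the power set of the finite ground set $\outers\cup\inners$, with the hyperedges then determined by $\vset$. Your antichain/Dedekind-number bound is in fact a cleaner way to finish than the paper's sequential vertex-by-vertex count (and your remark that $\ranks$ must be treated as external to the structural count, and that overcounting is harmless for mere finiteness, is sound).
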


To enumerate nonredundant hypergraphs, we first use the condition of Proposition~\ref{prop:vertexsubset}. Because of the vertex-subset constraint in Proposition~\ref{prop:vertexsubset}, a valid vertex set must be a subset of the power set of all the indices $\outers\cup\inners$, and its size is at most $2^{2^{|\outers\cup\inners|}}$. After enumerating the vertex sets satisfying this constraint, we eliminate some of them using the other propositions.\footnote{For more details, see the real code: \url{https://github.com/pfnet-research/einconv/blob/master/enumerate_graph.py}} We used this algorithm in the experiments (Section~\ref{sec:exp}).
\section{Nonlinear Extension}
Tensor decomposition involves multiple linear operations, and each vertex can be seen as a linear layer. For example, consider a linear map $\mtr{W}:\bbR^C\to\bbR^{C'}$. If $\mtr{W}$ is written as a product of three matrices $\mtr{W}=\mtr{A}\mtr{B}\mtr{C}$, we can consider the linear map to be a composition of three linear layers: $\mtr{W}(\mtr{x})=(\mtr{A}\circ\mtr{B}\circ\mtr{C})(\mtr{x})$ for a vector input $\mtr{x}\in\bbR^C$. This might lead one to conclude that, in addition to reducing computational complexity, tensor decomposition with many vertices also contributes to an increase in representability. However, because the rank of $\mtr{W}$ is determined by the minimum rank of either $\mtr{A},\mtr{B},$ or $\mtr{C}$, and the representability of a matrix is solely controlled by its rank, adding linear layers does not improve representability. This problem arises in Einconv layers.

A simple solution is to add nonlinear functions between linear layers. Although this is easy to implement, enumeration is no longer possible, because the equivalence relation becomes non-trivial with the introduction of nonlinearity, causing an infinite number of candidates to exist. 
It is not possible to enumerate an infinite number of candidates, thus an efficient neural architecture search algorithm (~\cite{zoph2016neural}) is needed. Many such algorithms have been proposed, based on genetic algorithms (GAs)~\cite{real2018regularized}, reinforcement learning~\cite{zoph2016neural}, and other methods~\cite{zoph2018learning,pham2018efficient}. In this study, we employ GA because hypergraphs have a discrete structure that is highly compatible with it. As multiobjective optimization problems need to be solved (e.g., number of parameters vs. prediction accuracy), we use the nondominated sorting genetic algorithm II (NSGA2)~\cite{deb2002fast}, which is one of the most popular multiobjective GAs. In Section~\ref{sec:exp-ga} we will demonstrate that we can find better Einconv layers by GA than by enumeration.

\section{Related Work}
Tensor network notation, a graphical notation for linear tensor operations, was developed by the quantum many-body physics community (see tutorial by \citet{bridgeman2017hand}). Our notation is basically a subset of this, except that ours allows hyperedges. Such hyperedges are convenient for representing certain convolutions, such as depthwise convolution (see Figure~\ref{fig:depsep}; the inclusion of the rightmost vertex indicates depthwise convolution). The reduction of redundant tensor networks was recently studied by \citet{ye2018tensor}, and we extended the idea to include convolution (Section~\ref{sec:reduction}). 

There are several studies that combine deep neural networks and tensor networks. \citet{stoudenmire2016supervised} studied shallow fully-connected neural networks, where the weight is decomposed using the tensor train decomposition~\cite{oseledets2011tensor}. \citet{novikov2015tensorizing} took a similar approach to deep feed-forward networks, which was later extended to recurrent neural networks~\cite{he2017wider,yang2017tensor}. 
\citet{cohen2016convolutional} addressed a CNN architecture that can be viewed as a huge tensor decomposition. They interpreted the entire forward process, including the pooling operation, as a tensor decomposition; this differs from our approach of reformulating a single convolutional layer. Another difference is their focus on a specific decomposition (hierarchical Tucker decomposition~\cite{hackbusch2009new}); we do not impose any restrictions on decomposition forms.

\section{Experiments}\label{sec:exp}

We examined the performance tradeoffs of Einconv layers in image classification tasks. We measured time complexity by counting the FLOPs of the entire forwarding path, and space complexity by counting the total number of parameters. All the experiments were conducted on NVIDIA P100 and V100 GPUs. The details of the training recipes are described in the Supplementary material.

\subsection{Enumeration}

First, we investigated the basic classes of Einconv for 2D and 3D convolutions. For 2D convolution with a filter size of $3\times 3$, we enumerated the 901 nonredundant hypergraphs having at most two inner indices, where the inner dimensions were all fixed to 2. In addition to these, we compared baseline Einconv layers that include nonlinear activations and/or more inner indices. We used the Fashion-MNIST dataset~\cite{xiao2017/online} to train the LeNet-5 network~\cite{lecun1998gradient}. The result (Figure~\ref{fig:enum_2d}) shows that, in terms of FLOPs, two baselines (standard and CP) achieve Pareto optimality, but other nameless Einconv layers fill the gap between those two. 

Similarly, for a $3\times 3\times 3$ filter, we enumerated 3D Einconv having at most one inner index, of which there were 492 instances in total. We used the 3D MNIST dataset~\cite{3dmnist} with architecture inspired by C3D~\cite{tran20141175}. The results (Figure~\ref{fig:enum_3d}) show that, in contrast to the 2D case, the baselines dominated the Pareto frontier. This could be because we did not enumerate the case with two inner indices due to its enormous size.\footnote{For 3D convolution, the number of tensor decompositions having two inner indices is more than ten thousand. Training all of them would require 0.1 million CPU/GPU days, which was infeasible with our computational resources.}

\begin{figure}
\centering
\includegraphics[width=.9\linewidth]{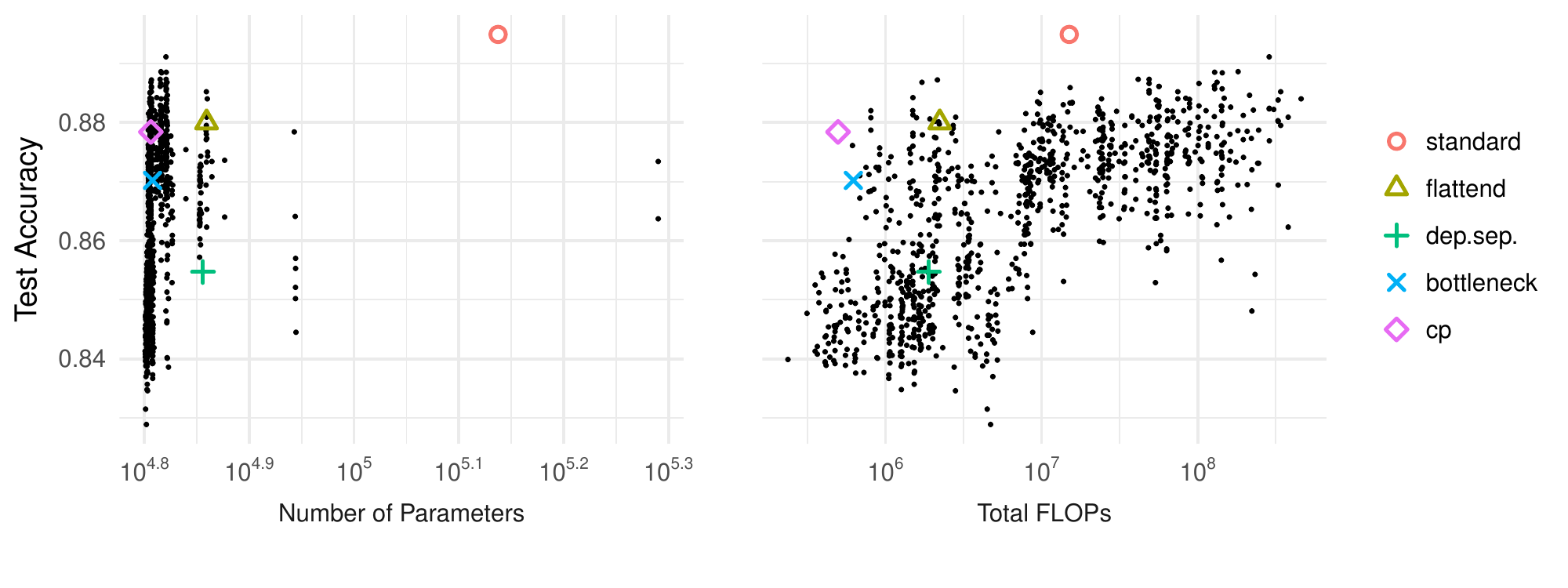}
\caption{Enumeration of 2D Einconv for LeNet-5 trained with Fashion-MNIST. Black dots indicate unnamed tensor decompositions found by the enumeration.} 
    \label{fig:enum_2d}
\end{figure}

\begin{figure}
\centering
\includegraphics[width=.9\linewidth]{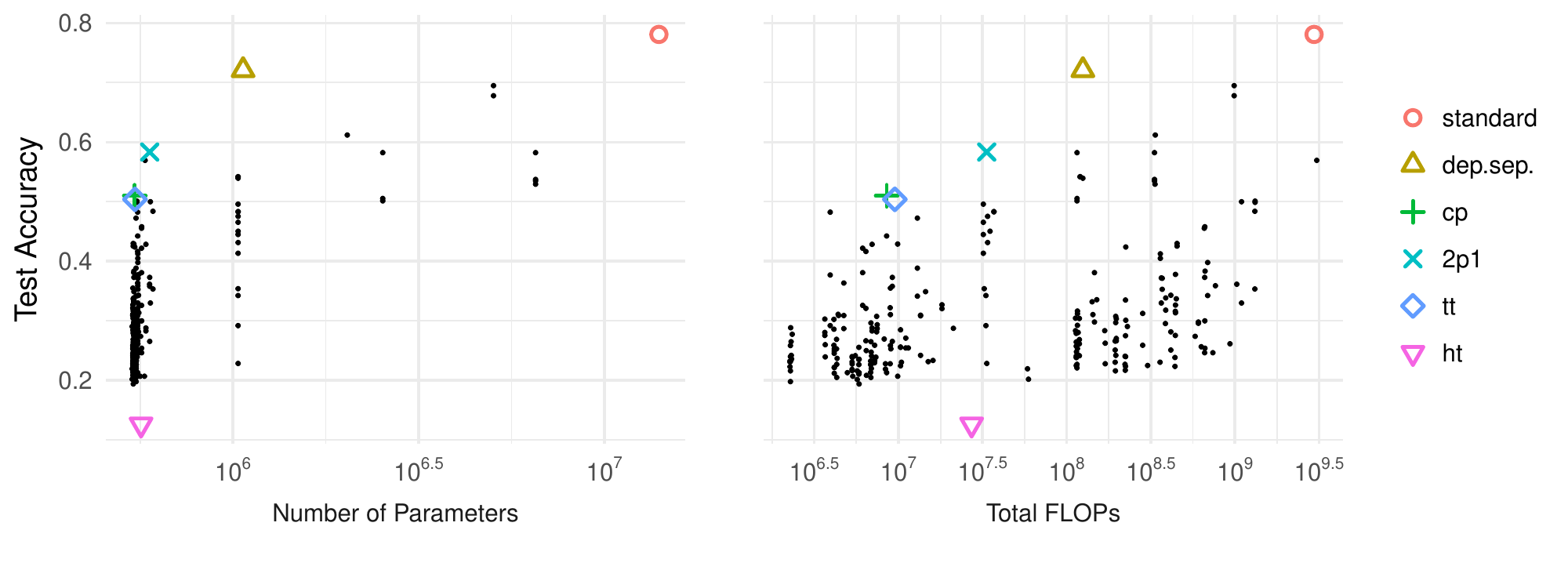}
\caption{Enumeration of 3D Einconv for C3D-like networks trained with 3D MNIST, where \texttt{2p1}, \texttt{tt}, and \texttt{ht} mean (2+1)D convolution~\cite{tran2018closer}, tensor train decomposition~\cite{oseledets2011tensor}, and hierarchical Tucker decomposition~\cite{hackbusch2009new}, respectively.} 
    \label{fig:enum_3d}
\end{figure}


\subsection{GA Search with Non-linear Activation}
\label{sec:exp-ga}

Next, we evaluated the full potential of Einconv by combining it with a neural architecture search. In contrast to the previous experiments, we used Einconv layers from a larger space, i.e., we allowed nonlinear activations (ReLUs), factoring-like multiple convolutions, and changes of the inner dimensions. We employed two architectures: LeNet-5 and ResNet-50. We trained LeNet-5 with the Fashion-MNIST dataset, and the ResNet-50 with the CIFAR-10 dataset. Note that, for ResNet-50, a significant number of Einconv instances could not be trained because the GPU memory was insufficient. For the GA search, we followed the strategy of AmoebaNet~\cite{real2018regularized}: we did not use crossover operations, and siblings were produced only by mutation. Five mutation operations were prepared for changing the number of vertices/hyperedges and two for changing the order of contraction.\footnote{See \url{https://github.com/pfnet-research/einconv/blob/master/mutation.py} for implementation details.} We set test accuracy and the number of parameters as multiobjectives to be optimized by NSGA2.

The results of LeNet-5 (Figure~\ref{fig:summarh_2l}) show the tradeoff between the multiobjectives. Within the clearly defined and relatively smooth Pareto frontier, nameless Einconv layers outperform the baselines. The best accuracy achieved by Einconv was $\sim 0.92$, which was better than that of the standard convolution ($\sim 0.91$). Although the results of ResNet-50 (Figure~\ref{fig:summarh_resnet}) show a relatively rugged Pareto frontier, Einconv still achieves better tradeoffs than named baselines other than the standard and CP convolutions.

\begin{figure}
\centering
\includegraphics[width=.9\linewidth]{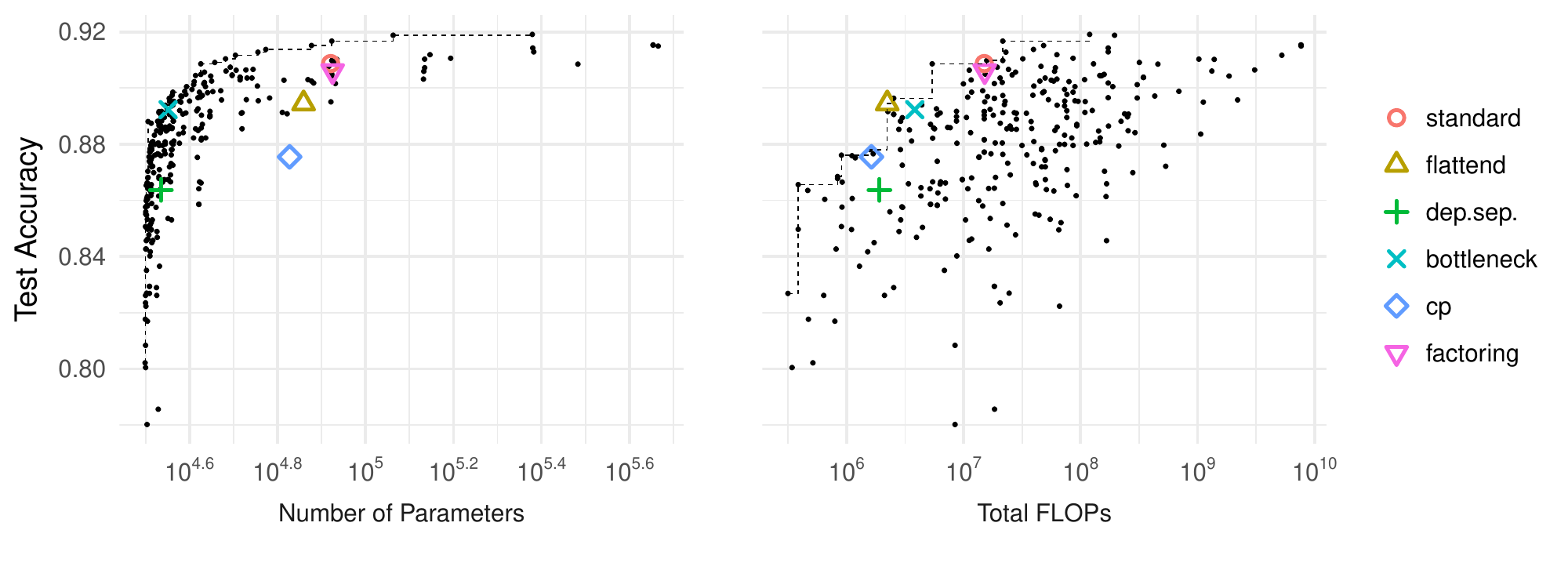}
\caption{GA search of 2D Einconv for LeNet-5 trained with Fashion-MNIST. Black dots indicate unnamed tensor decompositions found by the GA search.} 
\label{fig:summarh_2l}
\end{figure}

\begin{figure}
\centering
\includegraphics[width=.9\linewidth]{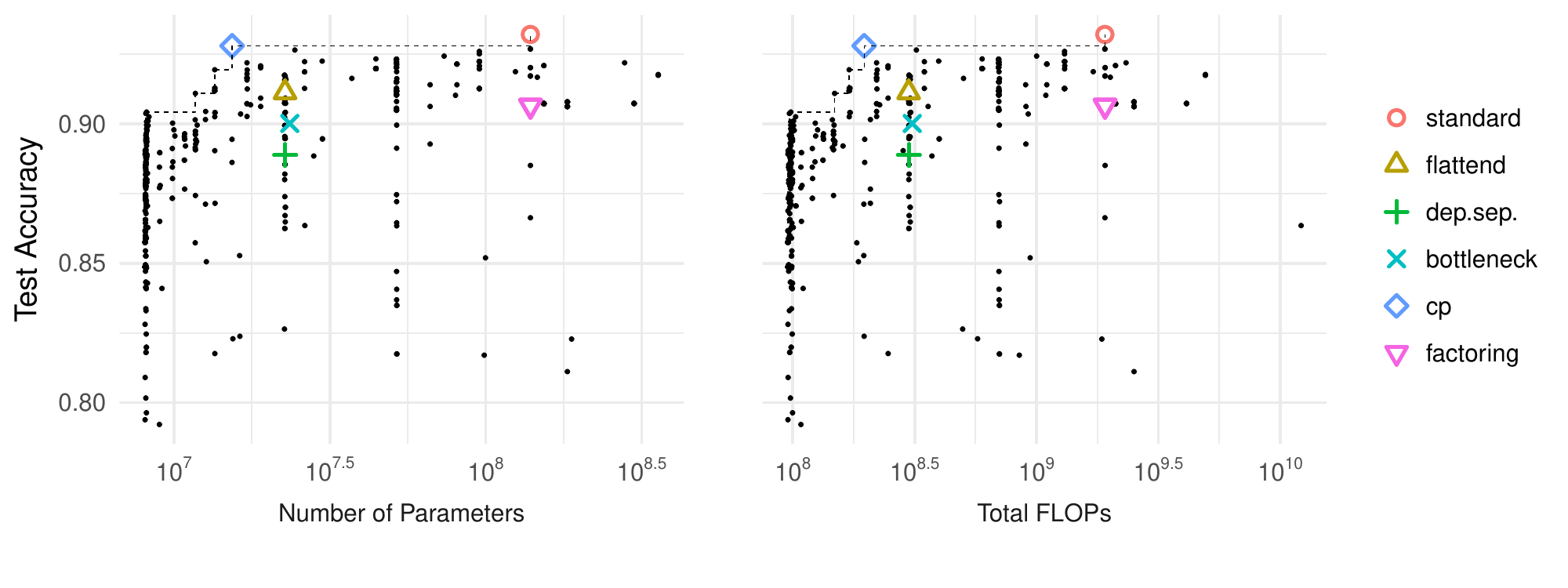}
\caption{GA search of 2D Einconv for ResNet-50 trained  with CIFAR-10.} 
\label{fig:summarh_resnet}
\end{figure}

\section{Conclusion and Discussion}

Herein, we studied hypergraphical structures in CNNs. We found that a variety of CNN layers may be described hypergraphically, and that there exists an enormous number of variants never previously encountered. We found experimentally that the Einconv layers, the proposed generalized CNN layers, yielded excellent results. 

One striking observation from the experiments is that certain existing decompositions, such as CP decomposition, consistently achieved good accuracy/complexity tradeoffs. This empirical result is somewhat unexpected; there is no theoretical reason that existing decompositions should outperform the new, unnamed ones. Developing a theory capable of explaining this phenomenon, or at least of characterizing the necessary conditions (e.g. symmetricity of decomposition) to achieve good tradeoffs would be a promising (but challenging) direction for future work.

One major limitation at present is the computational cost of searching. For example, the GA search for ResNet-50 in Section~\ref{sec:exp-ga} took 829 CPU/GPU days. This was mainly because of the long training periods (approximately 10 CPU/GPU hours for each training), but also because the GA may not have been leveraging the information on hypergraphs well. Although we incorporated some prior knowledge of hypergraphs such as the proximity regarding edge removing and vertex adding through mutation operations, simultaneous optimization of hypergraph structures and neural networks using sparse methods such as LASSO or Bayesian sparse models may be more promising.

\subsubsection*{Acknowledgments}
We thank our colleagues, especially Tommi Kerola, Mitsuru Kusumoto, Kazuki Matoya, Shotaro Sano, Gentaro Watanabe, and Toshihiko Yanase, for helpful discussion, and Takuya Akiba for implementing the prototype of an enumeration algorithm. We also thank Jacob Bridgeman for sharing an elegant TikZ style for drawing tensor network diagrams. We finally thank the anonymous (meta-)reviewers of NeurIPS 2019 for helpful comments and discussion.

\bibliographystyle{abbrvnat}
\bibliography{ref}

\appendix
\section*{Appendix}
\section{Proofs in Section~\ref{sec:reduction}}

\begin{proof}[Proof of Proposition~\ref{prop:vertexsubset}]
Let $\odot$ denote element-wise multiplication with broadcasting\footnote{\url{https://docs.scipy.org/doc/numpy-1.13.0/user/basics.broadcasting.html}}, and $m<n$. Then the contraction of $\tsr{U}_1,\dots,\tsr{U}_M$ along with $\vset$ is written as $
\tsr{F}_\vset(\tsr{U}_1,\dots)
=\tsr{F}_{\vset\setminus v_m}(\tsr{U}_1,\dots,\tsr{U}_{m-1},\tsr{U}_{m+1},\dots,\tsr{U}_{n-1},\tsr{U}_m\odot \tsr{U}_n,\dots)
$. Since $\tsr{U}_m\odot \tsr{U}_n \in\bbR^{\bigtimes_{a\in v_n}\theta(a)}$ for any $\tsr{U}_m$ and $\tsr{U}_n$, $\tsr{F}_\vset(\dots)$ reduces to $\tsr{F}_{\vset\setminus v_m}(\dots)$.
\end{proof}

\begin{proof}[Proof of Proposition~\ref{prop:sameedge}]
Let $\mathrm{Col}(\tsr{U}_m, a, b)$ denote the collapsing operator that reshapes tensor $\tsr{U}_m$ by concatenating its indices $\set{a,b}$ if $\set{a,b}\subseteq v_m$, and creates a new index $b'$ where its inner dimension is $R_{b'}=R_aR_b$. Since $\tsr{F}_\vset(\tsr{U}_1,\tsr{U}_2,\dots)=\tsr{F}_{\vset\obackslash a}(\mathrm{Col}(\tsr{U}_1,a,b),\mathrm{Col}(\tsr{U}_2,a,b),\dots)$, Proposition~\ref{prop:sameedge} can be proved using the same technique as in the proof of Proposition~\ref{prop:vertexsubset}.
\end{proof}

\begin{proof}[Proof of Proposition~\ref{prop:filtersize}]
Suppose we have $M$ size-invariant convolutions of size $(I_1,J_1),\dots,(I_M, J_M)$. By simple calculation, we see that the final convolution size $(I,J)$ is determined by $I=1+\sum_{m\in[M]} (I_m-1)$ and $J=1+\sum_{m\in[M]} (J_m-1)$. Therefore, possible choices are to change $\{I_m,J_m\mid m\in[M]\}$ with varying $M>= \min(\frac{I-1}{2},\frac{J-1}{2})$. The problem is thus reduced to the partition of integers, as stated.
\end{proof}

\begin{proof}[Proof of Theorem~\ref{thm:countablility}]
For simplicity, consider 2D convolution with a $3\times 3$ filter ($I=J=3$). Suppose we have $L\in\bbN$ inner indices $\exchannels=\set{c, r_1,\dots,r_{L-1}}$. According to Proposition~\ref{prop:filtersize}, the vertical index $i$ and the horizontal index $j$ have to be used only once, and in only two possible patterns: either (i) they are used on the same vertex, or (ii) they are separated on different vertices. First, we consider case (i). Assume $v_1$ contains $\set{i,j}$ and the subset of $\exchannels$, which contains $2^L$ patterns, and let $\exchannels_1=v_1\setminus \set{i,j}\subseteq 2^\exchannels$ be the selected subset. Next, consider $v_2$. To avoid the redundancy described in Proposition~\ref{prop:vertexsubset}, $v_2$ must contains indices that are not contained in $v_1$, which means that the choices for $v_2$ are in $2^\exchannels \setminus 2^{\exchannels_1}$. Repeating this process for $v_3, v_4,\dots$, we see that the number of patterns monotonically decreases. Moreover, the maximum length of the vertices is $L+1$, which is achieved when $\exchannels_1=\set{\emptyset}$ and each of $v_2,\dots,v_{L+1}$ has a single inner index. Therefore the number of nonredundant hypergraphs is finite. Case (ii) may be analyzed in the same way, except the maximum length of the vertices is $L+2$. For the case of larger filter sizes, we can employ a similar method using Proposition~\ref{prop:filtersize} that ensures that the number of combination patterns of the factoring convolution will be finite. 
\end{proof}

\section{Training Recipes}

\subsection{Enumeration}

\paragraph{2D} The architecture is \textvtt{Einconv(64)--MaxPooling--Einconv(128)--MaxPooling--FC(10)-Softmax}, where \textvtt{Einconv(k)} denotes an Einconv layer with $k$ output channels and \textvtt{FC(k)} denotes a fully-connected layer with $k$ output units. \textvtt{Maxpooling} is performed by a factor of 2 for each spatial dimension. We trained for 50 epochs using the Adam optimizer with a batch size 16, learning rate 2E-4, and weight decay rate 1E-6. 

\paragraph{3D} The architecture is \textvtt{Einconv(64)-ReLU-Einconv(128)-ReLU-MaxPooling-Einconv(256)-ReLU-Einconv(256)-ReLU-MaxPooling-Einconv(512)-ReLU-Einconv(512)-ReLU-GAP-FC(512)-FC(512)-FC(10)-Softmax}, where \textvtt{GAP} denotes global average pooling. We applied dropout with rate 50\% to fully-connected layers except the last layer. Other settings were the same as the 2D case. 

\subsection{GA Search}

\paragraph{LeNet-5} The architecture is \textvtt{Einconv(32)--MaxPooling--Einconv(32)--MaxPooling--FC(10)-Softmax}. We trained for at most 250 epochs using the Adam optimizer with batch size 128, learning rate 2E-4, and weight decay rate 5E-4. 

\paragraph{ResNet-50} We replaced all the bottleneck layers in ResNet-50 that do not rescale the spatial size by Einconv layers. We trained for at most 300 epochs using momentum SGD with batch size 32, a learning rate 0.05 that was halved every 25 epochs, and a weight decay rate 5E-4. Also, we used standard data augmentation methods: random rotation, color lightning, color flip, random expansion, and random cropping.

\end{document}